\newcommand\numberthis{\addtocounter{equation}{1}\tag{\theequation}}
\newcommand{\R}{\mathbb{R}}
\newcommand{\E}{\mathbb{E}}
\newcommand{\Rb}{\mathbf{R}}
\newcommand{\ib}{\mathbf{i}}
\newcommand{\Xc}{\mathcal{X}}
\newcommand{\Yc}{\mathcal{Y}}
\newcommand{\Nc}{\mathcal{N}}
\newcommand{\Fc}{\mathcal{F}}
\newcommand{\Ec}{\mathcal{E}}
\newcommand{\argmin}{\text{argmin}}
\newcommand{\norm}[1]{\left\lVert#1\right\rVert}
\newcommand{\abs}[1]{\left|#1\right|}
\newcommand{\ind}[1]{\mathbf{1}\left(#1\right)}
\newtheorem{theorem}{Theorem}
\newtheorem{assumption}{Assumption}
\newtheorem{corollary}[theorem]{Corollary}
\newtheorem{definition}{Definition}
\newtheorem{proposition}[theorem]{Proposition}
\begin{document}
%
\title{On Data-Dependent Random Features for Improved Generalization\\ in Supervised Learning}

\author{Shahin Shahrampour, Ahmad Beirami, Vahid Tarokh \\ 
School of Engineering and Applied Sciences, Harvard University\\
Cambridge, MA,
02138 USA}

\maketitle
\begin{abstract}
The randomized-feature approach has been successfully employed in large-scale kernel approximation and supervised learning. The distribution from which the random features are drawn impacts the number of features required to efficiently perform a learning task. Recently, it has been shown that employing data-dependent randomization improves the performance in terms of the required number of random features. In this paper, we are concerned with the randomized-feature approach in supervised learning for good generalizability. We propose the Energy-based Exploration of Random Features (EERF) algorithm based on a data-dependent score function that explores the set of possible features and exploits the promising regions. We prove that the proposed score function with high probability recovers the spectrum of the best fit within the model class. Our empirical results on several benchmark datasets further verify that our method requires smaller number of random features to achieve a certain generalization error compared to the state-of-the-art while introducing negligible pre-processing overhead. EERF can be implemented in a few lines of code and requires no additional tuning parameters.
\end{abstract}

\section{Introduction}
At the heart of many machine learning problems, kernel methods (such as Support Vector Machine (SVM) \cite{cristianini2000introduction}) describe the nonlinear representation of data via mapping the features to a high-dimensional feature space. Even without access to the explicit form of the {\em feature maps}, one can still compute their inner products inexpensively using a kernel function, an idea known as the ``kernel trick''. However, unfortunately, methods using kernel matrices are not applicable to large-scale machine learning as they incur a prohibitive computational cost scaling at least quadratically with data. This observation motivated \cite{rahimi2007random} to consider kernel approximation using {\it random features}, and extend the idea to train shallow architectures \cite{rahimi2009weighted}. Replacing the optimization of nonlinearities by randomization, randomized shallow networks efficiently approximate the function describing the input-output relationship via random features. Nevertheless, a natural concern is the stochastic oracle from which the features are sampled. 
As noted in \cite{yang2012nystrom}, the basis functions used by random Fourier features \cite{rahimi2009weighted} are sampled from a distribution that is {\it independent} of the training set, and hence, a large number of random features may be needed to learn the data subspace. Therefore, one can ask whether {\em data-dependent} sampling can improve the prediction accuracy in supervised learning.

Recently, \cite{sinha2016learning}  proposed a data-dependent sampling scheme using an optimization perspective that can reduce the number of random features needed for effective learning. The generalization performance of their method, however, relies on the regularization parameter of the optimization problem, which requires an extra level of tuning (e.g., using cross-validation). 

In this paper, within the framework of supervised learning, we develop a data-dependent sampling method, called Energy-based Exploration of Random Features (EERF), with the goal of better generalizability.  Our algorithm operates based on a score function that is defined with respect to (a subsample of) the training samples. The algorithm explores the domain of random features, evaluates the score function in different regions, and outputs the promising random features for generalization. We prove that the score function mimics the spectrum of the best fit within the model class with high probability. We further apply our results to practical datasets, where we observe that our algorithm learns the subspace faster than the state-of-the-art as a function of the number of random features. Notably, our algorithm does not require additional parameter tuning.

{\bf \noindent Related literature on random features:} 
Some previous works on random features have focused on kernel approximation as well as prediction in a supervised manner. It has been shown that a wide variety of kernels  can be approximated efficiently using random features. Examples include shift-invariant kernels using Monte Carlo \cite{rahimi2007random} and Quasi Monte Carlo \cite{yang2014quasi} sampling, polynomial kernels \cite{kar2012random}, additive kernels \cite{vedaldi2012efficient}, and many more. In particular, Gaussian kernel has received considerable attention (see e.g. \cite{felix2016orthogonal} for a recent study on efficient Gaussian kernel approximation), and the error of random Fourier features has been analyzed in the context of kernel approximation \cite{sutherland2015error}. Additionally, the generalization property of the randomized-feature approach has been theoretically studied from the statistical learning theory viewpoint \cite{rudi2016generalization}. 

Another line of research has focused on decreasing the time and space complexity of kernel approximation. In \cite{le2013fastfood}, the Fast-food method has been developed to approximate kernel expansions in log-linear time. The underlying idea is that Hadamard matrices combined with diagonal Gaussian matrices exhibit properties similar to dense Gaussian random matrices. Using this approach a class of flexible kernels has been proposed in \cite{yang2015carte}.

Data-dependent random features have been recently studied in \cite{yu2015compact,oliva2016bayesian,chang2017data} for approximation of shift-invariant kernels. We consider a broader class of kernels (see Eq. \eqref{kernel}) and propose a sampling scheme that improves the generalization error, particularly when the number of random features is small. Our work is particularly relevant to that of \cite{sinha2016learning}, where a data-dependent optimization approach is developed to sample features with promising generalization error for small to moderate number of bases. Their generalization performance relies on the regularization parameter in their optimization problem, which requires an extra level of tuning (e.g., using cross-validation). Our method, in contrast, does not need additional parameter tuning. Finally, the details of the benchmark algorithms used for comparison in this paper can be found  in Table \ref{table10} in the empirical evaluations section.

{\bf \noindent Nystr{\"o}m method:} 
We remark that Nystr{\"o}m method \cite{williams2001using,drineas2005nystrom} adopts an alternative viewpoint for approximation of kernel by a low rank matrix. The method samples a subset of training data, approximates a kernel matrix, and then transforms the data using the approximated kernel. We refer the reader to \cite{yang2012nystrom} for a discussion on the fundamental differences between the Nystr{\"o}m method and random features.
 
{\bf \noindent A note on multiple kernel learning (MKL):} 
The goal of MKL is to learn a good kernel based on training data (see e.g. \cite{gonen2011multiple} for a survey). For the supervised learning setup, various methods consider optimizing a convex, linear, or nonlinear combination of base kernels with respect to a measure (e.g. kernel alignment) to identify the ideal kernel \cite{kandola2002optimizing,cortes2009learning,cortes2012algorithms}. Another approach is to optimize the kernel and the empirical risk simultaneously \cite{kloft2011lp,lanckriet2004learning}. These methods enjoy various theoretical guarantees \cite{bartlett2002rademacher,cortes2010generalization}, but they involve costly computational steps, such as eigen-decomposition of the Gram matrix (see \cite{gonen2011multiple} for details). The distinction of our work with this literature is that we do not consider a combination of base kernels. Instead, we propose to use the randomized-feature approach of \cite{rahimi2009weighted,rahimi2007random} with a data-dependent sampling scheme to avoid computational cost.

{\noindent \bf Notation:}
We denote by $\Nc(\mu,\sigma^2)$ the Gaussian distribution with mean $\mu$ and variance $\sigma^2$, by $I_d$ the identity matrix of size $d$, and by $[N]$ the set of positive integers $\{1,\ldots,N\}$, respectively. $w_i$ is the $i$-th component of the vector $w$, whereas $z^m$ is $m$-th sample among the batch $\{z^j\}_{j=1}^M$.
\section{Problem Setup}
Consider the supervised learning setup: we are given a training {\it input-output} set $\{(x^n,y^n)\}_{n=1}^N$, where the pairs are generated independently from an {\it unknown, fixed} distribution $P_{\Xc\Yc}$, and for every $n\in [N]$, $x^n=[x^n_1\cdots x^n_d]^\top \in \Xc \subset \R^d$. For regression, the response variable $y^n \in \Yc \subseteq [-1,1]$, while for classification  $y^n \in \{-1,1\}$. The goal is to fit a function $f: \Xc  \to \R$ to training data via risk minimization. As $P_{\Xc\Yc}$ is not available, we consider minimizing the empirical risk $\widehat{\Rb}(f)$ in lieu of $\Rb(f)$,
\begin{align}
\resizebox{1\hsize}{!}{$%
\Rb(f)\triangleq\E_{P_{\Xc\Yc}}[c(f(x),y)], \quad \widehat{\Rb}(f)\triangleq\frac{1}{N}\sum_{n=1}^N c(f(x^n),y^n),\label{emprisk}
$%
}%
\end{align}
where $c(\cdot,\cdot)$ is a task-dependent loss function (e.g., quadratic for regression, hinge loss for SVM), measuring the dissimilarity of the mapping $f(x)$ and the output $y$ over training samples. In general, one often parametrizes the function $f(\cdot)$ to minimize $\widehat{\Rb}(f)$ over a parameter space. Kernel methods offer such solutions where $f(x)\approx\sum_{n=1}^N \alpha_nk(x^n,x)$, and the empirical risk $\widehat{\Rb}(f)$ is minimized over $\{\alpha_n\}_{n=1}^N$. However, an immediate drawback of this approach is its inapplicability to large-scale data, since the computational complexity scales at least with $O(N^2)$ for training the kernel matrix. To overcome this shortcoming, an elegant approach was proposed by \cite{rahimi2009weighted}, where shallow networks are parametrized using {\it random features}. Let us represent a feature map by $\phi: \Xc \times \Omega \to \R$, where $\Omega$ is the support set for random features. Then, considering functions of the form
\begin{align}\label{mapping}
f(x)=\int_\Omega F(\omega)\phi(x,\omega)d_\omega,
\end{align}
one can approximate $f(x)\approx \sum_{m=1}^M \theta_m \phi(x,\omega^m)$ and minimize $\widehat{\Rb}(f)$ over $\{\theta_m,\omega^m\}_{m=1}^M$, where $M$ is (hopefully) much smaller than $N$. The main issue would be the joint optimization of $\{\theta_m,\omega^m\}_{m=1}^M$, which results in a non-convex problem. \cite{rahimi2009weighted} showed that one can randomize over $\{\omega^m\}_{m=1}^M$ (the so-called randomized-feature approach) and minimize $\widehat{\Rb}(f)$ only on $\{\theta_m\}_{m=1}^M$, which is an efficiently solvable convex problem. The resulting solution was shown to be not much worse than the solution obtained by optimally tuning $\{\omega^m\}_{m=1}^M$. In this context, the feature map can be an eigenfunction of a positive-definite kernel, and for any $x^n,x^{n'}\in \Xc$, the kernel can be represented as
\begin{align}\label{kernel}
K_{P_\Omega}(x^n,x^{n'})=\int_\Omega \phi(x^n,\omega)\phi(x^{n'},\omega)P_\Omega(\omega)d_\omega,
\end{align}
where $P_\Omega$ is a distribution on the random features\footnote{More accurately, $P_\Omega$ is a probability density function when $\Omega$ is continuous, whereas it is a probability mass function when $\Omega$ is discrete, but instead, we use the word distribution to refer to both.}. Using various feature maps and distributions one can recover commonly used kernels (e.g. Gaussian, Cauchy, Laplacian, arc-cosine, and linear) from \eqref{kernel}. A list of possible choices can be found in Table 1 of \cite{yang2014random}.

The approach of \cite{rahimi2009weighted} is particularly appealing due to its computational tractability since preparing the feature matrix during training requires $O(MN)$ computations, while evaluating a test sample needs $O(M)$ computations, which significantly outperforms the complexity of traditional kernel methods. However, the potential drawback is that random features are drawn from a distribution $P_\Omega$, {\it independent} of the training set, and therefore, we may require a large number of random features before learning the data subspace~\cite{yang2012nystrom}. 

More specifically, under mild assumptions, the algorithm proposed in \cite{rahimi2009weighted} outputs an approximation $\widehat{f}(\cdot)$, which given a sampling distribution $P_\Omega$, with probability at least $1-\varepsilon$ satisfies,
\begin{align} \label{recht}
\resizebox{1\hsize}{!}{$%
\Rb(\widehat{f}~)-\min_{f\in \Fc_{P_\Omega}}\Rb(f) \leq O\left(\left(\frac{1}{\sqrt{M}}+\frac{1}{\sqrt{N}}\right)C\sqrt{\log \varepsilon^{-1}}\right),
$%
}%
\end{align}
where
\begin{align}\label{class}
\resizebox{1\hsize}{!}{$%
\Fc_{P_\Omega}\triangleq\{f(x)=\int_\Omega F(\omega)\phi(x,\omega)d_\omega : \abs{F(w)} \leq CP_{\Omega} (\omega) \}.
$%
}%
\end{align}
 As noted by \cite{rahimi2009weighted}, $\Fc_{P_\Omega}$ is a rich class consisting of functions whose weights decay faster than the given sampling distribution $P_{\Omega}$. As an example, in the case of sinusoidal feature maps, the set comprises of functions whose Fourier transforms decay faster than $CP_{\Omega}$. This intuitively implies that given a sampling distribution $P_0$ (e.g., Gaussian) and a large constant $C_0$, we can hope to (under mild technical assumptions on the target function) push the best candidate within the class $\Fc_{P_0}$ to the target function. Given such $P_0$ and $C_0$, let us assume that the best function fit within the model class $\Fc_{P_0}$ is
\begin{align}\label{bestmodel}
f_0(x)\triangleq \argmin_{f\in \Fc_{P_\Omega}}\Rb(f)=\int_\Omega F_0(\omega)\phi(x,\omega)d_\omega.
\end{align}
While using the pair $(C_0,P_0)$, $f_0(\cdot)$ can eventually be recovered (due to \eqref{recht}), we may as well try to modify the initial sampling distribution $P_0$ according to the shape of $F_0(\cdot)$. In other words, if we knew $F_0(\cdot)$ precisely, we could set $P_{\Omega}(\omega)=\frac{\abs{F_0(\omega)}}{\int_\Omega \abs{F_0(\omega')}d_{\omega'}}$ and $C=\int_\Omega \abs{F_0(\omega')}d_{\omega'}$, respectively, to sample the randomized features that weight more in the spectrum $F_0(\cdot)$.
 In the next section, we propose an algorithm that exploits the training data to find a ``good" set of random features; thereby, improving the approximation of $f_0(\cdot)$ using finitely many random features for better generalization with small number of random features. The key is to use a {\it data-dependent} score function that approximately mimics the shape of $F_0(\cdot)$ with some error.


\section{Proposed Method: Energy-based Exploration of Random Features (EERF)}

In this section, we propose an algorithm to choose random features that maintain a low generalization error in supervised learning. The algorithm employs a {\it score} function to explore the domain of random features and retains the samples with the highest score. The key is to use a proper score function $S: \Omega \to \R$, which we define to be 
\begin{align}\label{score}
S(\omega)\triangleq\E_{P_{\Xc\Yc}}[y\phi(x,\omega)]   \ \  \ \ \  \ \ \widehat{S}(\omega)\triangleq\frac{1}{N}\sum_{n=1}^N y^n\phi(x^n,\omega),
\end{align}
where $\widehat{S}(\cdot)$ denotes its empirical estimate. The score function can mimic kernel polarization \cite{baram2005learning} asymptotically in the limit of large $M$. In particular, $\frac{1}{M}\sum_{m=1}^{M}\widehat{S}^2(\omega^m)$ for an i.i.d. sequence $\{\omega^m\}_{m=1}^M$ amounts to kernel polarization, which aims to polarize the data in the associated feature space to draw correspondence between
the proximity of the points in the high-dimensional feature space and their responses. 
Roughly speaking, $S^2(\omega)$ can be considered to be an energy spectral density, after which the proposed algorithm is named.
As is formally stated in Theorem~\ref{thm}, the proposed score function $S(\omega)$ is aligned to the spectrum $F_0(\omega)$ (up to an inevitable projection error). Given this result, we can use $\widehat{S}(\cdot)$, the empirical version of the true score, to re-weight features and modify the initial data-independent sampling distribution. Before further investigation of the behavior of the score function, we describe our algorithm.

{\noindent \bf Algorithm:}
Our algorithm works as follows: it draws $M_0$ samples from an initial distribution $P_0$, evaluates them in the empirical score given in \eqref{score}, and selects the top $M$ samples  in the sense of maximizing $|\widehat{S}(\cdot)|$. The pseudo-code is given in Algorithm \ref{ALGO1}.
\begin{algorithm}[h!]
	\caption{Energy-based Exploration of Random Features (EERF)}
	{\bf Input:} $\{(x^n,y^n)\}_{n=1}^{N}$, the feature map $\phi(\cdot,\cdot)$, integers $M_0$ and $M$ where $M\leq M_0$, initial sampling distribution $P_0$.\\ 
	\begin{algorithmic}[1]\label{ALGO}
			\STATE Draw samples $\{\tilde{\omega}^m\}_{m=1}^{M_0}$ independently from $P_0$. 
			\STATE Evaluate the samples in $\widehat{S}(\cdot)$, the empirical score in \eqref{score}.
			\STATE Sort $|\widehat{S}(\cdot)|$ for all $M_0$ samples in descending order, and let $\{\omega^m\}_{m=1}^{M}$ be the top $M$ arguments, i.e., the ones that give the top $M$ values in the sorted array.
	\end{algorithmic}
	{\bf Output:} $\{\omega^m\}_{m=1}^{M}$.
\label{ALGO1}
\end{algorithm}

Once we have the ``good'' $M$ features $\{\omega^m\}_{m=1}^{M}$, we can solve the following empirical risk minimization \cite{rahimi2009weighted}
\begin{align}\label{risk-recht}
\resizebox{0.9095\hsize}{!}{$%
\widehat{\theta}=\underset{\theta: \norm{\theta}_\infty \leq \frac{C}{M}}{\argmin} ~~~ \left\{\frac{1}{N}\sum_{n=1}^N c\left(\frac{1}{\sqrt{M}}\sum_{m=1}^M \theta_m \phi(x^n,\omega^m),y^n\right)\right\},
$%
}%
\end{align}
to approximate the underlying model. We remark that the EERF algorithm requires $O(dNM_0)$ computations to calculate the empirical score and $O(M_0\log M_0)$ time on average to sort the $M_0$ obtained scores. One can often use a subsample of the training set instead of the entire $N$ samples, and the value of $M_0$ should be set to an integer multiple of $M$. The initial distribution $P_0$ is either trivial to choose (e.g. uniform for the linear kernel, or standard Gaussian for the arccosine kernel), or can be selected using some rules-of-thumb. We elaborate on these issues in the experiments. 

{\noindent \bf Theoretical results:}
The key to understanding Algorithm \ref{ALGO1} is to analyze the empirical score $\widehat{S}(\cdot)$. It is immediate from McDiarmid's inequality that with probability at least $1-\delta$, the empirical score is concentrated around the true score as 
$$
\abs{\widehat{S}(\omega)-S(\omega)} \leq O\left(\sqrt{\frac{ 2\log\frac{M_0}{\delta}}{N}}\right),
$$ 
for all samples $\{\tilde{\omega}^m\}_{m=1}^{M_0}$. Restricting our attention to the regression model in Theorem \ref{thm}, we show that $F_0(\cdot)$ and $S(\cdot)$ (when normlizad) exhibit similar behavior. That is, we can use the empirical version of $S(\cdot)$ in lieu of the {\it unknown} spectrum $F_0(\cdot)$ to better approximate $f_0(\cdot)$. An informal version of our result can be stated as follows

\begin{theorem}\label{thm}
For the regression model where $\E_{P_{\Yc \vert \Xc}}[y]=f^\star(x)$, under some technical assumptions, we have
\begin{equation}\label{theo}
\frac{\abs{S(\omega)-\text{err}_{\text p}(\omega)}}{\int_\Omega \abs{S(u)-\text{err}_{\text p}(u)}d_u}\approx \frac{\abs{F_0(\omega)}}{\int_\Omega \abs{F_0(u)}d_u},
\end{equation}
where $\text{err}_{\text p}(\cdot)$ is bounded by the sup-norm of $f^\star(\cdot)-f_0(\cdot)$.
\end{theorem}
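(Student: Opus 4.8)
The plan is to peel the score function apart using the tower property and the integral representation of $f_0$, so that the ``projection error'' emerges as the discrepancy between $f^\star$ and $f_0$. First I would use $\E_{P_{\Yc\vert\Xc}}[y]=f^\star(x)$ in the definition \eqref{score} to write
\begin{align*}
S(\omega)=\E_{P_{\Xc\Yc}}[y\phi(x,\omega)]=\E_{P_\Xc}\!\big[f^\star(x)\phi(x,\omega)\big].
\end{align*}
Splitting $f^\star=f_0+(f^\star-f_0)$ and \emph{defining} $\text{err}_{\text p}(\omega)\triangleq\E_{P_\Xc}\big[(f^\star(x)-f_0(x))\phi(x,\omega)\big]$, boundedness of the feature map ($\abs{\phi}\le 1$, which is among the technical assumptions) immediately yields $\abs{\text{err}_{\text p}(\omega)}\le\norm{f^\star-f_0}_\infty$, the claimed bound. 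It then remains to analyze $S(\omega)-\text{err}_{\text p}(\omega)=\E_{P_\Xc}[f_0(x)\phi(x,\omega)]$.

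Second, I would substitute the representation $f_0(x)=\int_\Omega F_0(\omega')\phi(x,\omega')d_{\omega'}$ from \eqref{bestmodel} and exchange expectation and integration (Fubini, justified by $\abs{F_0}\le C_0 P_0$ and $\abs{\phi}\le1$), obtaining
\begin{align*}
S(\omega)-\text{err}_{\text p}(\omega)=\int_\Omega \Psi(\omega,\omega')\,F_0(\omega')\,d_{\omega'},\qquad \Psi(\omega,\omega')\triangleq\E_{P_\Xc}\big[\phi(x,\omega)\phi(x,\omega')\big].
\end{align*}
Thus $S-\text{err}_{\text p}$ is the image of the target spectrum $F_0$ under the integral operator with kernel $\Psi$, i.e.\ the Gram operator of the feature maps under the data marginal $P_\Xc$. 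The decisive technical assumption is that this operator is close to a scaled identity, $\Psi(\omega,\omega')\approx \kappa\,\delta(\omega-\omega')$ for a constant $\kappa>0$ (equivalently, the features are approximately orthonormal with respect to $P_\Xc$; for sinusoidal maps this holds up to a vanishing correction under mild conditions on $P_\Xc$). Under this assumption $S(\omega)-\text{err}_{\text p}(\omega)\approx\kappa F_0(\omega)$, so dividing both sides by their $L^1(\Omega)$ norms cancels $\kappa$ and gives \eqref{theo}; the $\approx$ sign absorbs a term of order $\norm{\Psi-\kappa I}\,\norm{F_0}_1$.

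The main obstacle is exactly this last step: controlling how far the feature Gram operator $\Psi$ is from a multiple of the identity. A genuinely quantitative version requires either specializing to a concrete feature family (e.g.\ Fourier features with a data density whose characteristic function decays quickly, so the off-diagonal mass of $\Psi$ is explicitly small) or recasting the claim on a finite/discretized feature set, where $\Psi$ becomes a matrix and ``near-identity'' can be imposed via a coherence-type condition. A secondary subtlety is the optimality characterization of $f_0$: since $\Fc_{P_0}$ is a convex constraint set rather than a linear subspace, $f^\star-f_0$ need not be exactly orthogonal to every $\phi(\cdot,\omega)$, which is precisely why $\text{err}_{\text p}$ cannot be dropped and must be carried through as the projection-error term bounded by $\norm{f^\star-f_0}_\infty$.
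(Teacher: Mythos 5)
Your proposal follows essentially the same route as the paper's proof: the tower-property decomposition $S(\omega)-\text{err}_{\text p}(\omega)=\E_{P_\Xc}[f_0(x)\phi(x,\omega)]$, the Fubini exchange turning this into the Gram operator acting on $F_0$, and the key assumption that this operator is (approximately) a scaled delta, after which $L^1$ normalization cancels the constant. The only substantive points you gloss over, which the paper handles explicitly, are (i) that the approximation is formalized as an exact limit over a parametric family of data distributions $P_\Xc^\lambda$ with $\lambda\to\infty$ (Portmanteau plus bounded convergence), and (ii) that for cosine-type features the Gram kernel carries a second, reflected term concentrating at $\omega_R=-\omega_R'$, which forces the paper to assume $F_R$ even and to recover only the ``relevant'' coordinate block of the spectrum rather than all of $F_0$.
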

The exact statement of the theorem (including assumptions) and its proof and consequences are given in the extended version of the paper~\cite{AAAI-arZiv}. Note that the projection error is inevitable and is a defect of the model class, and not the algorithm. The content of the theorem is that the score function aligns with the ``spectrum'' of the best model in the model class \eqref{class} (up to some projection error). Following the discussion after \eqref{bestmodel}, recall that the right-hand side of \eqref{theo} is precisely what we are seeking for the reconstruction of $f_0(\cdot)$, and our algorithm calculates an empirical version of $S(\cdot)$ to approximate the right-hand side of \eqref{theo}. Moreover, as we shall find in the supplementary material, $\textit{err}_{\textit p}(\cdot)$ is a decreasing function of $C$  in \eqref{class}, i.e., by increasing $C$, we make the class $\Fc_{P_\Omega}$ richer and decrease the projection error. 

We remark that although the focus of Theorem \ref{thm} is on the regression model, the same approach intuitively applies to the logistic regression model for classification. In logistic regression, it can be shown that $\E_{P_{\Yc \vert \Xc}}[y]=\tanh(f^\star(x)/2)$. Observe that $|\tanh(z/2)|$ is a monotonic function of $|z|$, and hence, the selection of random features based on the score function still aligns with the spectrum of the best model within the model class  \eqref{class}. Thus, the underlying intuition used to describe polarization (alignment) after \eqref{score} still holds.


\begin{table*}[t!]
\caption{We compare our work to the baselines in the left-most column. The right-most column lists the prior art (on random features) that was compared against in the baseline paper.}
\begin{center}
\resizebox{2.1\columnwidth}{!}{
\begin{tabular}{| c ||  c  | c | c | @{}m{0pt}@{}} 
 \hline 
Baseline   & Data-dependent   & Class of kernels & Prior work used for comparison by the baseline &\\ [3.25 ex]
\hline
RKS \cite{rahimi2009weighted} & No & Eq. \eqref{kernel} & Adaboost &\\ [3.25 ex]
\hline
ORF \cite{felix2016orthogonal} & No & Gaussian &RFF \cite{rahimi2007random}, Fastfood \cite{le2013fastfood}, QMC \cite{yang2014quasi}, Circulant \cite{yu2015compact} &\\ [3.25 ex]
\hline
LKRF \cite{sinha2016learning} & Yes & Eq. \eqref{kernel} & RKS \cite{rahimi2009weighted}&\\ [3.25 ex]
\hline
SES \cite{chang2017data}   & Yes   & Gaussian &  RFF \cite{rahimi2007random}, BQ \cite{huszar2012optimally}, QMC \cite{yang2014quasi}  &\\ [3.25 ex]
\hline
\end{tabular}\label{table10}}
\end{center}
\end{table*}

\section{Empirical Results}
\subsection{Gaussian Kernel}\label{sec:simulations}
We apply our proposed method to several datasets from the UCI Machine Learning Repository. Since all of our baseline algorithms are applicable to Gaussian kernels, we first compare our method to the state-of-the-art within that framework, and next we show the applicability of our method to linear and arccosine kernels.

{\noindent \bf Benchmark algorithms:}
We use the algorithms in Table \ref{table10} as baselines for comparison. Notice that in Table \ref{table10}, we have also reported the prior work used by each baseline for comparison. The following comments are in order: 
\begin{itemize}
\item The ORF algorithm involves a QR decomposition step ($O\left(d^3\right)$ time) which can be side-stepped using the companion algorithm SORF in \cite{felix2016orthogonal}. The main advantage of SORF, which combines Walsh-Hadamard matrices and diagonal ``sign-flipping'' matrices, is computational, thus when the prediction accuracy is concerned, SORF and ORF are shown to have similar performance, while SORF performs worse than ORF for $d<32$ \cite{felix2016orthogonal}.
\item LKRF introduces a pre-processing optimization to re-weight random features. The algorithm initially samples $M_0$ random features, forms the optimization with $O\left(dM_0N\right)$ computations, and requires $O\left(M_0\log\epsilon^{-1}\right)$ time to find an $\epsilon$-optimal solution. Also, the optimization involves a hyper-parameter balancing the trade-off between an alignment measure versus the $f$-divergence of solution with the uniform distribution. We run the algorithm multiple times with the hyperparameter in the set $\{10^{-5},10^{-4},\ldots,10^5\}$ and report the best result. 
\item The SES algorithm also re-weights random features by solving an optimization problem using sketching techniques. Letting $T_S$ be the time cost of sketching, the optimization problem costs $O\left(rd^2 +T_S\right)$, where $r$ is the number of samples included in the sketching matrix. The main purpose of SES is kernel approximation, but when applied to supervised learning on a number of datasets, SES has proven to be competitive to Monte Carlo and Quasi Monte Carlo methods (see Table \ref{table10}). 
\end{itemize}
Following \cite{rahimi2009weighted}, we replace the infinity-norm constraint of \eqref{risk-recht} by a quadratic regularizer in practice. We then tune the regularization parameter by trying different values from $\{10^{-5},10^{-4},\ldots,10^5\}$. For all methods (including ours) in the Gaussian case, we sample random features from $\frac{1}{\sigma}\Nc(0,I_d)$. The value of $\sigma$ for each dataset is chosen to be the mean distance of the $50$th $\ell_2$ nearest neighbor, which is shown to result in good classification\footnote{This choice is a rule-of-thumb and further tuning may result in improved generalization. However, we use the same choice for all of the methods for a fair comparison.} \cite{felix2016orthogonal}. It is also important to note that:
\begin{itemize}
\item RKS, ORF, and SES draw $M$ samples from the Gaussian distribution. RKS directly uses the samples, ORF ``orthogonalizes'' them to another set of $M$ vectors, and SES re-weights them using an optimization.
\item In contrast, LKRF and EERF (our work) draw $M_0>M$ samples from the Gaussian distribution, process them, and use the most promising $M$ samples. The value of $M_0$ for each dataset along with the pre-processing overhead for both algorithms are reported in Table \ref{table6}.
\end{itemize}

All codes are written in MATLAB and run on a machine with CPU 2.9 GHz and 16 GB memory.

\begin{figure*}[t]
\begin{center}
\includegraphics[width = \linewidth]{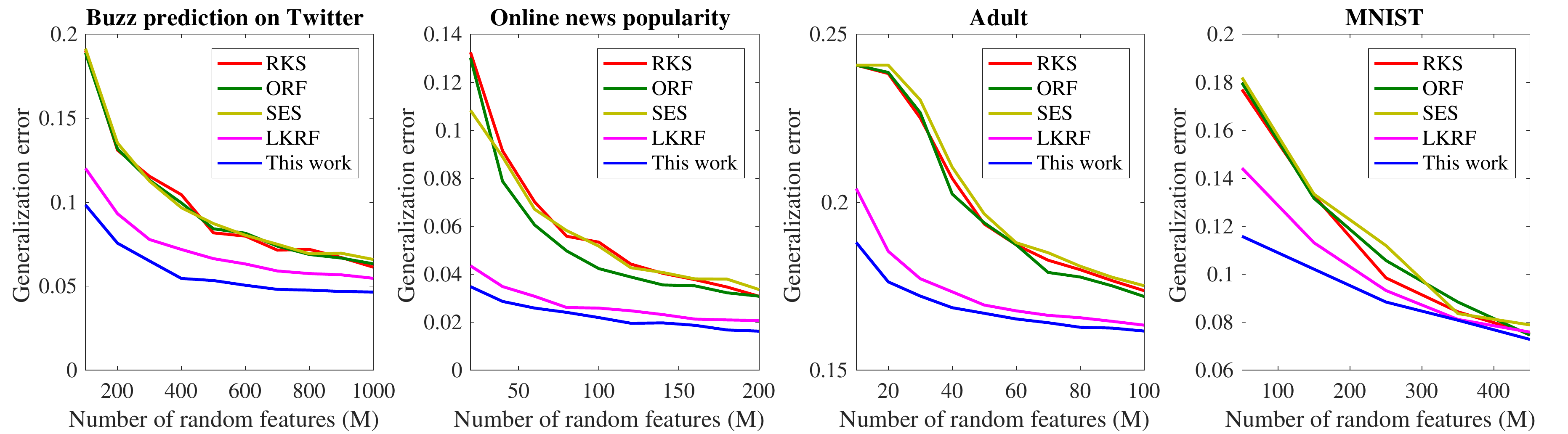}
\end{center}
	\caption{Performance on practical datasets: we compare the generalization error of our method (EERF) with the baselines RKS, ORF, SES, and LKRF. In all cases, for a fixed $M$, our algorithm achieves a smaller generalization error. }
	\label{plot1}
\end{figure*}

{\noindent \bf Datasets:}
Table \ref{table1} represents the number of training samples ($N_\text{train}$) and test samples ($N_\text{test}$) used for each dataset. If training and test sets are provided explicitly, we use them accordingly; otherwise, we split the dataset randomly. The features in all datasets are scaled to be empirically zero mean and unit variance and the responses in regression tasks are normalized to $[-1,1]$.
\begin{table}[h!]
\caption{The description of the datasets used for Gaussian kernel: the number of features, training samples, and test samples are denoted by $d$, $N_\text{train}$, and $N_\text{test}$, respectively.}
\begin{center}
\resizebox{1\columnwidth}{!}{
\begin{tabular}{| c ||  c | c | c | c | @{}m{0pt}@{}} 
 \hline 
  Dataset &  Task   & $d$ & $N_\text{train}$ & $N_\text{test}$ &\\ [1.5 ex]
 \hline  
 Buzz prediction on Twitter &  Regression  &  77  & 93800 & 46200 &\\ [1.5 ex]
 \hline
Online news popularity &  Regression &  58   & 26561 & 13083 &\\ [1.5 ex]
 \hline
Adult & Classification & 122     & 32561 & 16281  &\\ [1.5 ex]
 \hline
MNIST  & Classification  & 784    & 60000 & 10000  &\\ [1.5 ex]
\hline
\end{tabular}\label{table1}}
\end{center}
\end{table}

\begin{table*}[h!]
\caption{Comparison of the time cost and performance of our algorithm versus RKS. $t_{pp}$, $t_{\text{train}}$, and $t_{\text{train}}$ represent pre-processing, training, and testing time (seconds). $N_0$ is the number of samples we use for pre-processing, and $M_0$ is the number of random features we initially generate. $M$ is the number of random features used by both algorithms for eventual prediction. The standard errors are reported in parentheses.}
\begin{center}
\resizebox{2\columnwidth}{!}{
\begin{tabular}{| c ||  c | c | c | c | c | c| c |c |c | c| @{}m{0pt}@{}} 
 \hline 
  Dataset &  $M$   &  $M_0$ &  $N_0/N $ & Our $t_{pp}$ & Our $t_{\text{train}}$ & Our $t_\text{test}$ & RKS $t_\text{train}$ & RKS $t_\text{test}$ & Our error (\%) &  RKS error (\%) &\\ [1.5 ex]
 \hline  
 Buzz prediction on Twitter &  1000 &  10000 &  10\%  & 0.84  & 1.96 & 1.76  & 2.11 &1.78 & {\bf 4.65} (4e-2) & 6.09 (7e-2) &\\ [1.5 ex]
 \hline
Online news popularity &  200 & 20000 & 5\% & 0.53  & 0.15 & 0.07 & 0.13 & 0.04 &{\bf  1.63} (4e-2) & 3.08 (5e-2) &  \\ [1.5 ex]
 \hline
Adult & 100 &  2000     & 5\% & 0.19 & 1.78 & 0.05   &  1.61 & 0.06 & {\bf 16.16} (2e-2) & 17.37 (6e-2) &\\ [1.5 ex]
 \hline
MNIST  & 450  &  10000 & 20\%   & 5.20  & 16.17 &  8.58  & 19.45  & 10.65  & {\bf 7.28} (3e-2) & 7.53 (1.8e-1) &\\ [1.5 ex]
\hline
\end{tabular}\label{table2}}
\end{center}
\end{table*}

\begin{figure*}
\begin{center}
\includegraphics[width = \linewidth]{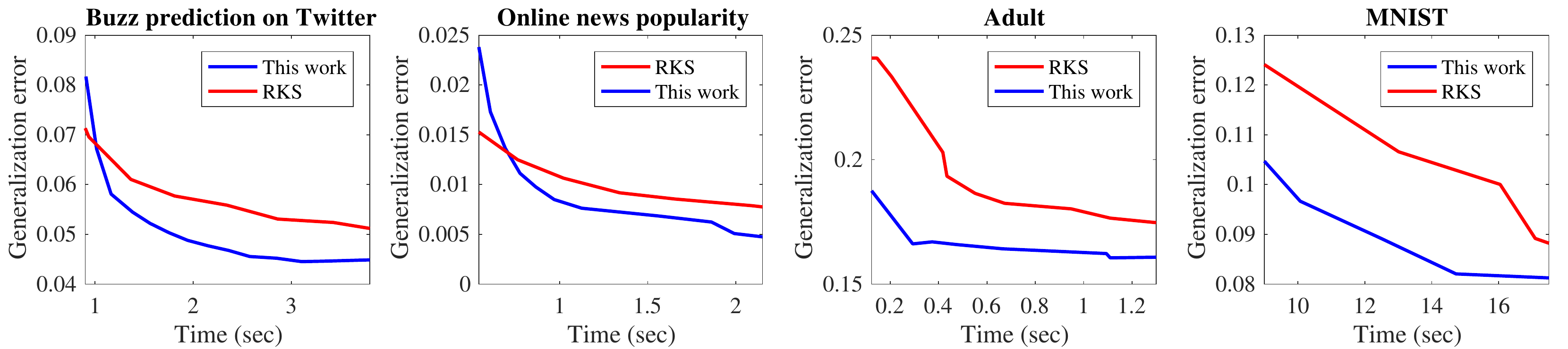}
\end{center}
	\caption{Generalization error versus time for our method against RKS. The time for our method is the summation of training and pre-processing time, whereas for RKS it is the training time.}
	\label{plot3}
\end{figure*}

\begin{figure*}[t]
\begin{center}
\includegraphics[width = 0.9\linewidth]{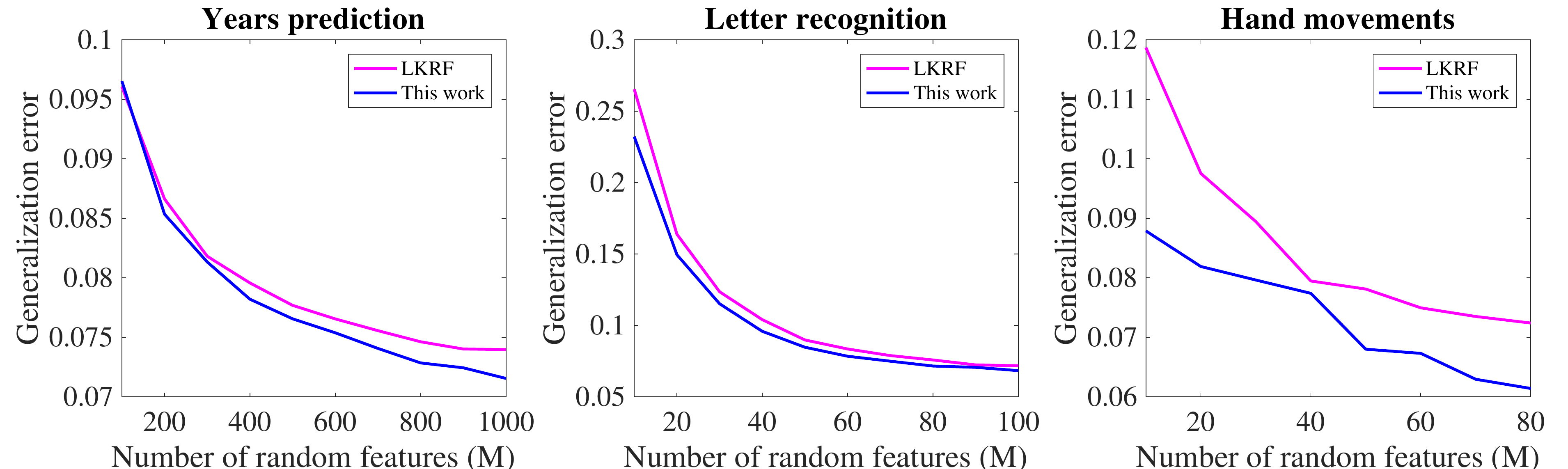}
\end{center}
	\caption{Performance on three datasets: we compare the generalization error of our method (EERF) against LKRF.}
	\label{plot2}
\end{figure*}

\begin{table}[h!]
\caption{The description of the datasets used for linear and arccosine kernels: the number of features, training samples, and test samples are denoted by $d$, $N_\text{train}$, and $N_\text{test}$, respectively. $H(\cdot)$ denotes the Heaviside step function ($H(x)=0.5+0.5\text{sgn}(x)$, where $\text{sgn}(\cdot)$ is the sign function).}
\begin{center}
\resizebox{1\columnwidth}{!}{
\begin{tabular}{| c ||  c | c | c | c | c | @{}m{0pt}@{}} 
 \hline 
  Dataset &  Task   & $\phi(x,\omega)$ &  $d$ & $N_\text{train}$ & $N_\text{test}$ &\\ [1.5 ex]
 \hline
Years prediction &  Regression  &  $(\omega^\top x)H(\omega^\top x)$ & 90   & 463715 & 51630 &\\ [1.5 ex]
 \hline  
Letter recognition &  Classification  & $(\omega^\top x)^2H(\omega^\top x)$  & 16  & 15000 & 5000 &\\ [1.5 ex]
 \hline
Hand movements &  Classification  & $x_\omega$ & 561 & 7352 & 2947 &\\ [1.5 ex]
 \hline
\end{tabular}\label{table3}}
\end{center}
\end{table}

\begin{table*}[h!]
\caption{Comparison of the time cost and performance of our algorithm versus LKRF. $t_{pp}$, $t_{\text{train}}$, and $t_{\text{train}}$ represent pre-processing, training, and testing time (sec). $N_0$ is the number of samples both algorithms use for pre-processing, and $M_0$ is the number of random features they initially generate. $M$ is the number of random features used by both algorithms for eventual prediction. The standard errors are reported in parentheses.}
\begin{center}
\resizebox{2.05\columnwidth}{!}{
\begin{tabular}{| c ||  c | c | c | c | c | c| c| c  | c| @{}m{0pt}@{}} 
 \hline 
  Dataset &  $M$   &  $M_0$ &  $N_0/N$ &  Our $t_{pp}$ & LKRF $t_{pp}$ &  Our $t_{\text{train}}$  & LKRF $t_{\text{train}}$& Our error (\%) & LKRF error (\%)& \\ [1.5 ex]
 \hline
 Buzz prediction on Twitter &  1000 &  10000 &  10\%  & 0.84  & 0.97 & 1.96 & 1.88 & {\bf 4.65} (4e-2) & 5.23 (6e-2)& \\ [1.5 ex]
\hline
Online news popularity &  200 & 20000 & 5\% & 0.53  &  0.50 &    0.15 & 0.14  &{\bf  1.63}  (4e-2) & 2.07 (5e-2) &  \\ [1.5 ex]
\hline
Adult & 100 &  2000     & 5\% & 0.19 & 0.08 & 1.78   &  1.35  & {\bf 16.16} (2e-2) & 16.34 (2e-2) &\\ [1.5 ex]
\hline
MNIST  & 450  &  10000 & 20\%   & 5.20  & 6.13  &  15.97 & 16.17 & {\bf 7.28} (3e-2) & 7.59 (1.7e-1) &\\ [1.5 ex]
\hline
Year prediction & 1000 & 4000 & 10\% &  6.23  & 7.24 & 32.31 & 53.96 & {\bf 7.15} (1.9e-2) & 7.40 (1.4e-2) &\\ [1.5 ex]
\hline
Letter recognition & 100 & 500 & 100\% & 4.55   & 5.44 & 11.33 & 12.95 & {\bf 6.83} (7e-2) & 7.17 (8e-2) &\\ [1.5 ex]
\hline
Hand movement & 80 & 561 & 100\% &  0.18  & 0.04 & 0.83 & 0.98 & {\bf 6.14} (2.7e-3) & 7.24 (1.2e-2) &\\ [1.5 ex]
\hline
\end{tabular}\label{table6}}
\end{center}
\end{table*}

{ \noindent \bf Performance:}
The results on datasets in Table \ref{table1} are reported in Fig. \ref{plot1}: for each dataset, by pre-processing random features in the score function \eqref{score}, our method learns the subspace faster compared to state-of-the-art, i.e., we require smaller number of random features $M$ to achieve a given generalization error threshold. As the number of samples increases, all methods tend to generalize better, which is not surprising, since they eventually sample the ``good'' random features for learning the data model. In the regime of moderate $M$, LKRF closely competes with our algorithm due to its data-dependent pre-processing phase. We will elaborate on the performance of our method versus LKRF in the next section, after experiments on linear and arccosine kernels are also presented. 

Table \ref{table2} tabulates the time cost and the generalization error for our method and RKS used with a fixed Gaussian kernel. For each dataset, the statistics are reported for the largest value of $M$ used in the experiment. We randomly subsample $N_0$ data points of the dataset to calculate the empirical score (e.g., for ``Buzz prediction on Twitter'' we use $10\%$ of the training samples) and generate an initial $M_0$ random features to evaluate the score function. Then, the most promising $M$ samples with the highest scores are selected following Algorithm \ref{ALGO1}, and the performance is compared to the case where $M$ Monte-Carlo samples are generated by RKS.  As an example, for the ``Buzz prediction on Twitter'' dataset, our method reduces the test error of RKS by $23.63\%$. However, this accuracy comes at a computational cost in the pre-processing stage. Table \ref{table2} also tabulates the pre-processing, training, and testing time of our algorithm. Except for the ``Online news popularity'' dataset, the pre-processing time is always less than the training time, and most notably, for ``Adult'' dataset it is only $10.2\%$ of the training time. In general, the comparison between the two may not be immediate: our pre-processing requires $O(M_0N_0d)$ computations to evaluate the score function, followed by the time cost of sorting an array of size $M_0$ (on average $O(M_0\log M_0)$). On the other hand, while training requires $O(MNd)$ computations to build the feature matrix, the training time can be largely affected by the choice of regularization parameter used in lieu of the norm-infinity constraint in \eqref{risk-recht}. For instance, in regression, the parameter directly governs the condition number of the $M\times M$ matrix that is to be inverted. As a rule of thumb, one can select $M_0$ in the range $5M$ to $20M$ and set $N_0$ to $0.1N$.

Since our method adds a computational overhead versus RKS, we also plot the generalization error versus time for both methods in Fig. \ref{plot3}. The time for RKS represents the training time, whereas for our method it is the sum of the training and pre-processing time. For example, in Adult dataset, where our computational overhead is quite negligible, our method has superior accuracy versus RKS for any computational time. However, in general the trend is as follows: for very small number of random features (or relatively bad accuracy), our method is inferior, but past a certain computational time threshold, we outperform RKS. This is not surprising: for very small number of features, training is fast, but our method still calculates the empirical score, adding additional cost to training. Once we have more random features, training tends to take more time and the preprocessing time becomes less significant compared to the training time.

\subsection{Linear and Arccosine Kernels}
The feature map $x_{\omega}$ results in linear kernel when $\omega$ is sampled uniformly from $[d]$, and $(\omega^\top x)^nH(\omega^\top x)$ with $\omega \sim\Nc(0,I_d)$ gives the arccosine kernel\footnote{The constraints in Theorem \ref{thm} do not hold for the feature map associated to the arccosine kernel, but we still observed improvement in the generalization error in practice.} of order $n$ ($H(\cdot)$ denotes the Heaviside step function, i.e., $H(x)=0.5+0.5\text{sgn}(x)$, where $\text{sgn}(\cdot)$ is the sign function). These two kernels are among many others that conform to~\eqref{kernel}. In this part, we focus on these two kernels and compare our method with LKRF \cite{sinha2016learning}. ORF and SES are designed for Gaussian kernels and are not applicable in this setting. RKS is data-independent, and as we saw in the previous part, for small number of random features, it is outperformed by EERF and LKRF. 
Table \ref{table3} describes the datasets as well as their corresponding feature map used for the experiment. We follow the previous section in data standardization and tuning the regularization parameters. 

{\noindent \bf Performance of EERF versus LKRF:}
In Fig. \ref{plot2}, we compare our performance with LKRF \cite{sinha2016learning} in terms of the generalization error on several datasets. Our method slightly outperforms LKRF on ``Year prediction'' and ``Letter recognition'', while significantly improving the generalization error on ``Hand movement''. Further, Table \ref{table6} shows the time cost and the generalization error for the largest value of $M$ in the plots. For the pre-processing stage, both algorithms sample an initial distribution $M_0$ times and incur $O\left(dN_0M_0\right)$ computational cost. Our algorithm sorts an array of size $M_0$ with average $O(M_0\log M_0)$ time, while LKRF solves an optimization with $O\left(M_0\log\epsilon^{-1}\right)$ time to reach the $\epsilon$-optimal solution. Therefore, depending on the tolerance $\epsilon$, the processing time may vary for LKRF. 

The main advantage of our method over LKRF is that EERF is parameter-free and does not require tuning. LKRF solves an optimization problem to re-weight random features, which depends on a regularization parameter. The new weights can range from a uniform to a degenerate delta distribution, depending on the regularization parameter, which needs to be tuned. We observed that this brings forward two issues: i) a validation step for tuning the regularization parameter is needed, (ii) the obtained parameter works well only for a range of values for $M$ and needs to be re-tuned for others.

\section{Concluding Remarks}
In this paper, we studied data-dependent random features for supervised learning. We proposed an algorithm called Energy-based Exploration of Random Features (EERF), which is based on a data-dependent scoring rule for sampling random features. We proved that under mild conditions, the proposed score function with high probability recovers the spectrum of best model fit within the postulated model class. We further empirically showed that our proposed method outperforms the state-of-the-art data-independent and data-dependent algorithms based on the randomized-feature approach. The EERF algorithm introduces a small computational pre-processing overhead and requires no additional tuning parameters in contrast to other data-dependent methods for generation of random features. Our method is particularly designed to reduce generalization error in regression and classification. Inspired by the recent results on the application of random features in matrix completion (cf. \cite{si2016goal}), an interesting future direction is to adapt our score function to improve generalization in this setup.

\section{Acknowledgements}
This work was supported in part by NSF ECCS Award No.~1609605. We thank the anonymous reviewers for their constructive feedback and the suggestion to provide time-vs-accuracy plots.

{\small
\bibliography{shahin}

\begin{thebibliography}{}

\bibitem[\protect\citeauthoryear{Baram}{2005}]{baram2005learning}
Baram, Y.
\newblock 2005.
\newblock Learning by kernel polarization.
\newblock {\em Neural Computation} 17(6):1264--1275.

\bibitem[\protect\citeauthoryear{Bartlett and
  Mendelson}{2002}]{bartlett2002rademacher}
Bartlett, P.~L., and Mendelson, S.
\newblock 2002.
\newblock Rademacher and gaussian complexities: Risk bounds and structural
  results.
\newblock {\em Journal of Machine Learning Research} 3(Nov):463--482.

\bibitem[\protect\citeauthoryear{Chang \bgroup et al\mbox.\egroup
  }{2017}]{chang2017data}
Chang, W.-C.; Li, C.-L.; Yang, Y.; and Poczos, B.
\newblock 2017.
\newblock Data-driven random fourier features using stein effect.
\newblock {\em Proceedings of the Twenty-Sixth International Joint Conference
  on Artificial Intelligence (IJCAI-17)}.

\bibitem[\protect\citeauthoryear{Cortes, Mohri, and
  Rostamizadeh}{2009}]{cortes2009learning}
Cortes, C.; Mohri, M.; and Rostamizadeh, A.
\newblock 2009.
\newblock Learning non-linear combinations of kernels.
\newblock In {\em Advances in neural information processing systems},
  396--404.

\bibitem[\protect\citeauthoryear{Cortes, Mohri, and
  Rostamizadeh}{2010}]{cortes2010generalization}
Cortes, C.; Mohri, M.; and Rostamizadeh, A.
\newblock 2010.
\newblock Generalization bounds for learning kernels.
\newblock In {\em Proceedings of the 27th International Conference on Machine
  Learning (ICML-10)},  247--254.

\bibitem[\protect\citeauthoryear{Cortes, Mohri, and
  Rostamizadeh}{2012}]{cortes2012algorithms}
Cortes, C.; Mohri, M.; and Rostamizadeh, A.
\newblock 2012.
\newblock Algorithms for learning kernels based on centered alignment.
\newblock {\em Journal of Machine Learning Research} 13(Mar):795--828.

\bibitem[\protect\citeauthoryear{Cristianini and
  Shawe-Taylor}{2000}]{cristianini2000introduction}
Cristianini, N., and Shawe-Taylor, J.
\newblock 2000.
\newblock An introduction to support vector machines.

\bibitem[\protect\citeauthoryear{Drineas and
  Mahoney}{2005}]{drineas2005nystrom}
Drineas, P., and Mahoney, M.~W.
\newblock 2005.
\newblock On the nystr{\"o}m method for approximating a gram matrix for
  improved kernel-based learning.
\newblock {\em journal of machine learning research} 6(Dec):2153--2175.

\bibitem[\protect\citeauthoryear{Felix \bgroup et al\mbox.\egroup
  }{2016}]{felix2016orthogonal}
Felix, X.~Y.; Suresh, A.~T.; Choromanski, K.~M.; Holtmann-Rice, D.~N.; and
  Kumar, S.
\newblock 2016.
\newblock Orthogonal random features.
\newblock In {\em Advances in Neural Information Processing Systems},
  1975--1983.

\bibitem[\protect\citeauthoryear{G{\"o}nen and
  Alpayd{\i}n}{2011}]{gonen2011multiple}
G{\"o}nen, M., and Alpayd{\i}n, E.
\newblock 2011.
\newblock Multiple kernel learning algorithms.
\newblock {\em Journal of Machine Learning Research} 12(Jul):2211--2268.

\bibitem[\protect\citeauthoryear{Husz{\'a}r and
  Duvenaud}{2012}]{huszar2012optimally}
Husz{\'a}r, F., and Duvenaud, D.
\newblock 2012.
\newblock Optimally-weighted herding is bayesian quadrature.
\newblock {\em UAI}.

\bibitem[\protect\citeauthoryear{Kandola, Shawe-Taylor, and
  Cristianini}{2002}]{kandola2002optimizing}
Kandola, J.; Shawe-Taylor, J.; and Cristianini, N.
\newblock 2002.
\newblock Optimizing kernel alignment over combinations of kernel.

\bibitem[\protect\citeauthoryear{Kar and Karnick}{2012}]{kar2012random}
Kar, P., and Karnick, H.
\newblock 2012.
\newblock Random feature maps for dot product kernels.
\newblock In {\em International conference on artificial intelligence and
  statistics},  583--591.

\bibitem[\protect\citeauthoryear{Kloft \bgroup et al\mbox.\egroup
  }{2011}]{kloft2011lp}
Kloft, M.; Brefeld, U.; Sonnenburg, S.; and Zien, A.
\newblock 2011.
\newblock Lp-norm multiple kernel learning.
\newblock {\em Journal of Machine Learning Research} 12(Mar):953--997.

\bibitem[\protect\citeauthoryear{Lanckriet \bgroup et al\mbox.\egroup
  }{2004}]{lanckriet2004learning}
Lanckriet, G.~R.; Cristianini, N.; Bartlett, P.; Ghaoui, L.~E.; and Jordan,
  M.~I.
\newblock 2004.
\newblock Learning the kernel matrix with semidefinite programming.
\newblock {\em Journal of Machine learning research} 5(Jan):27--72.

\bibitem[\protect\citeauthoryear{Le, Sarl{\'o}s, and
  Smola}{2013}]{le2013fastfood}
Le, Q.; Sarl{\'o}s, T.; and Smola, A.
\newblock 2013.
\newblock Fastfood-approximating kernel expansions in loglinear time.
\newblock In {\em Proceedings of the international conference on machine
  learning}, volume~85.

\bibitem[\protect\citeauthoryear{Oliva \bgroup et al\mbox.\egroup
  }{2016}]{oliva2016bayesian}
Oliva, J.~B.; Dubey, A.; Wilson, A.~G.; P{\'o}czos, B.; Schneider, J.; and
  Xing, E.~P.
\newblock 2016.
\newblock Bayesian nonparametric kernel-learning.
\newblock In {\em Artificial Intelligence and Statistics},  1078--1086.

\bibitem[\protect\citeauthoryear{Rahimi and Recht}{2007}]{rahimi2007random}
Rahimi, A., and Recht, B.
\newblock 2007.
\newblock Random features for large-scale kernel machines.
\newblock In {\em Advances in Neural Information Processing Systems (NIPS)}.

\bibitem[\protect\citeauthoryear{Rahimi and Recht}{2009}]{rahimi2009weighted}
Rahimi, A., and Recht, B.
\newblock 2009.
\newblock Weighted sums of random kitchen sinks: Replacing minimization with
  randomization in learning.
\newblock In {\em Advances in Neural Information Processing Systems (NIPS)},
  1313--1320.

\bibitem[\protect\citeauthoryear{Rudi, Camoriano, and
  Rosasco}{2016}]{rudi2016generalization}
Rudi, A.; Camoriano, R.; and Rosasco, L.
\newblock 2016.
\newblock Generalization properties of learning with random features.
\newblock {\em arXiv preprint arXiv:1602.04474}.

\bibitem[\protect\citeauthoryear{Shahrampour, Beirami, and
  Tarokh}{2017}]{AAAI-arZiv}
Shahrampour, S.; Beirami, A.; and Tarokh, V.
\newblock 2017.
\newblock On data-dependent random features for improved generalization in
  supervised learning.
\newblock {\em arXiv preprint}.

\bibitem[\protect\citeauthoryear{Si \bgroup et al\mbox.\egroup
  }{2016}]{si2016goal}
Si, S.; Chiang, K.-Y.; Hsieh, C.-J.; Rao, N.; and Dhillon, I.~S.
\newblock 2016.
\newblock Goal-directed inductive matrix completion.
\newblock In {\em Proceedings of the 22nd ACM SIGKDD International Conference
  on Knowledge Discovery and Data Mining},  1165--1174.
\newblock ACM.

\bibitem[\protect\citeauthoryear{Sinha and Duchi}{2016}]{sinha2016learning}
Sinha, A., and Duchi, J.~C.
\newblock 2016.
\newblock Learning kernels with random features.
\newblock In {\em Advances In Neural Information Processing Systems},
  1298--1306.

\bibitem[\protect\citeauthoryear{Sutherland and
  Schneider}{2015}]{sutherland2015error}
Sutherland, D.~J., and Schneider, J.
\newblock 2015.
\newblock On the error of random fourier features.
\newblock In {\em Proceedings of the Thirty-First Conference on Uncertainty in
  Artificial Intelligence},  862--871.

\bibitem[\protect\citeauthoryear{Vedaldi and
  Zisserman}{2012}]{vedaldi2012efficient}
Vedaldi, A., and Zisserman, A.
\newblock 2012.
\newblock Efficient additive kernels via explicit feature maps.
\newblock {\em IEEE transactions on pattern analysis and machine intelligence}
  34(3):480--492.

\bibitem[\protect\citeauthoryear{Williams and Seeger}{2001}]{williams2001using}
Williams, C., and Seeger, M.
\newblock 2001.
\newblock Using the nystr{\"o}m method to speed up kernel machines.
\newblock In {\em Advances in Neural Information Processing Systems 13}.

\bibitem[\protect\citeauthoryear{Yang \bgroup et al\mbox.\egroup
  }{2012}]{yang2012nystrom}
Yang, T.; Li, Y.-F.; Mahdavi, M.; Jin, R.; and Zhou, Z.-H.
\newblock 2012.
\newblock Nystr{\"o}m method vs random fourier features: A theoretical and
  empirical comparison.
\newblock In {\em Advances in neural information processing systems},
  476--484.

\bibitem[\protect\citeauthoryear{Yang \bgroup et al\mbox.\egroup
  }{2014a}]{yang2014quasi}
Yang, J.; Sindhwani, V.; Avron, H.; and Mahoney, M.
\newblock 2014a.
\newblock Quasi-monte carlo feature maps for shift-invariant kernels.
\newblock In {\em Proceedings of The 31st International Conference on Machine
  Learning (ICML-14)},  485--493.

\bibitem[\protect\citeauthoryear{Yang \bgroup et al\mbox.\egroup
  }{2014b}]{yang2014random}
Yang, J.; Sindhwani, V.; Fan, Q.; Avron, H.; and Mahoney, M.~W.
\newblock 2014b.
\newblock Random laplace feature maps for semigroup kernels on histograms.
\newblock In {\em Proceedings of the IEEE Conference on Computer Vision and
  Pattern Recognition},  971--978.

\bibitem[\protect\citeauthoryear{Yang \bgroup et al\mbox.\egroup
  }{2015}]{yang2015carte}
Yang, Z.; Wilson, A.; Smola, A.; and Song, L.
\newblock 2015.
\newblock A la carte--learning fast kernels.
\newblock In {\em Artificial Intelligence and Statistics},  1098--1106.

\bibitem[\protect\citeauthoryear{Yu \bgroup et al\mbox.\egroup
  }{2015}]{yu2015compact}
Yu, F.~X.; Kumar, S.; Rowley, H.; and Chang, S.-F.
\newblock 2015.
\newblock Compact nonlinear maps and circulant extensions.
\newblock {\em arXiv preprint arXiv:1503.03893}.

\end{thebibliography}
\bibliographystyle{aaai.bst}
}

\newpage

\onecolumn

\section{Supplementary Material}
{\bf Notation:} We denote by $\delta(\cdot)$ the Dirac delta function, by $\ind{\cdot}$ the indicator function, by $\ib=\sqrt{-1}$ the imaginary unit, and by $\mathbf{Card}(\omega)$ the dimension of vector $\omega$, respectively. 

\subsection{Spectrum with Continuous Support}
For the case that $\Omega$ is continuous, we adhere to the following technical assumptions. We associate the subscript ``R'' to the quantities that are ``relevant'' to the recovered part of spectrum, whereas we use the subscript ``I'' for the quantities that are ``irrelevant'' to the the recovered part of spectrum.
\begin{assumption}\label{A1}
Let the spectrum be separable such that $F_0(\omega)=F_I(\omega_I)F_R(\omega_R)$ for $\omega=(\omega_I,\omega_R)$, where $\omega_I\in \Omega_I$, $\omega_R\in \Omega_R$, and $\Omega=\Omega_I\times \Omega_R$. We assume that for $i \in \{I,R\}$, $F_i(\cdot)$ is uniformly bounded, continuous and $L^1$ integrable over $ \Omega_i$, and $x^n$ is an absolutely continuous random variable for all $n \in \{1,\ldots N\}.$
\end{assumption}
\begin{assumption}\label{A2}
The feature map satisfies $\sup_{x \in \Xc, \omega \in \Omega}\abs{\phi(x,\omega)} \leq 1$.  
\end{assumption}
\begin{definition}\label{ortho} (Orthogonality) For any $\omega=(\omega_I,\omega_R)$ and $\omega'=(\omega_I',\omega_R')$, given a parametric distribution $P_{\Xc}^\lambda$ on inputs, we say that the orthogonality condition holds if 
\begin{align}\label{eq444}
\E_{P_{\Xc}^\lambda}[\phi(x,\omega')\phi(x,\omega)]=H_I(\omega_I,\omega_I')H_R(\omega_R-\omega_R';\lambda)+H_I(\omega_I,-\omega_I')H_R(\omega_R+\omega_R';\lambda),
\end{align}
such that $H_I(\cdot,\cdot)$ is uniformly bounded and continuous, $H_R(\cdot;\lambda)\not\equiv 0$ is non-negative for all $\lambda>0$, and 
\begin{align}\label{condition}
 \frac{H_R(\omega_R-\omega_R';\lambda)}{\int_{\Omega_R} H_R(\omega_R-u;\lambda)d_{u}} \longrightarrow \delta(\omega_R-\omega_R'),
\end{align}
as $\lambda \rightarrow \infty$.
\end{definition}
\noindent
We now proceed to the exact statement of Theorem \ref{thm} and its proof. Simply, the theorem indicates that we can recover the part of the spectrum for which the condition \eqref{condition} holds. Note that since we prove the result for a parametric distribution $P_{\Xc}^\lambda$ on $\Xc$, the score function $S(\omega)$ in \eqref{score} becomes $S(\omega;\lambda)$, i.e., it depends on the parameter $\lambda$. 
\begingroup
\def\thetheorem{\ref{thm}}
\begin{theorem}
Let Assumptions \ref{A1}-\ref{A2} hold with $\Omega_R=\R^{\mathbf{Card}(\omega_R)}$, and consider the regression setting, i.e., $\E_{P_{\Yc \vert \Xc}}[y]=f^\star(x)$. Given the orthogonality condition in Definition \ref{ortho}, if $F_R(\cdot)$ is even on $\Omega_R$, for any $\omega_I\in \Omega_I$, the score function $S(\omega;\lambda)=S(\omega_I,\omega_R;\lambda)$ defined in \eqref{score} satisfies 
\begin{equation}\label{eq41}
\lim_{\lambda \rightarrow \infty}\frac{\abs{S(\omega_I,\omega_R;\lambda)-\Ec_{C_0,P_0}(\omega_I,\omega_R;\lambda)}}{\int_{\Omega_R}\abs{S(\omega_I,u;\lambda)-\Ec_{C_0,P_0}(\omega_I,u;\lambda)}d_u}= \frac{\abs{F_R(\omega_R)}}{\int_{\Omega_R}\abs{F_R(u)}d_u},
\end{equation} 
where
$$
\Ec_{C_0,P_0}(\omega_I,\omega_R;\lambda)=\Ec_{C_0,P_0}(\omega;\lambda)\triangleq\E_{P_{\Xc}^\lambda}[(f^\star(x)-f_0(x))\phi(x,\omega)],
$$
is bounded by the sup-norm of $f^\star(\cdot)-f_0(\cdot)$.
\end{theorem}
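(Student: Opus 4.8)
The idea is to reduce the numerator $S(\omega;\lambda)-\Ec_{C_0,P_0}(\omega;\lambda)$ to a (generalized) convolution against $H_R(\cdot;\lambda)$ and then let $\lambda\to\infty$. First, by the tower property and $\E_{P_{\Yc\vert\Xc}}[y]=f^\star(x)$, the score is $S(\omega;\lambda)=\E_{P_{\Xc}^\lambda}[f^\star(x)\phi(x,\omega)]$, so that $S(\omega;\lambda)-\Ec_{C_0,P_0}(\omega;\lambda)=\E_{P_{\Xc}^\lambda}[f_0(x)\phi(x,\omega)]$. Substituting the representation $f_0(x)=\int_\Omega F_0(\omega')\phi(x,\omega')d_{\omega'}$ from \eqref{bestmodel} and interchanging expectation and integral (legitimate since $\abs{\phi}\le 1$ by Assumption~\ref{A2} and $F_0\in L^1$ by Assumption~\ref{A1}) gives $\int_\Omega F_0(\omega')\,\E_{P_{\Xc}^\lambda}[\phi(x,\omega')\phi(x,\omega)]\,d_{\omega'}$. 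Plugging in the orthogonality identity \eqref{eq444} and the separability $F_0(\omega')=F_I(\omega_I')F_R(\omega_R')$, the $\Omega_I$- and $\Omega_R$-integrals decouple, leaving $G_I(\omega_I)\,I_1(\omega_R;\lambda)+\widetilde G_I(\omega_I)\,I_2(\omega_R;\lambda)$, where $G_I(\omega_I)\triangleq\int_{\Omega_I}F_I(\omega_I')H_I(\omega_I,\omega_I')d_{\omega_I'}$ and $\widetilde G_I(\omega_I)\triangleq\int_{\Omega_I}F_I(\omega_I')H_I(\omega_I,-\omega_I')d_{\omega_I'}$ are finite by boundedness of $H_I$ and $F_I\in L^1$, and $I_1(\omega_R;\lambda)=\int_{\Omega_R}F_R(\omega_R')H_R(\omega_R-\omega_R';\lambda)d_{\omega_R'}$, $I_2(\omega_R;\lambda)=\int_{\Omega_R}F_R(\omega_R')H_R(\omega_R+\omega_R';\lambda)d_{\omega_R'}$.

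Next I normalize. Since $\Omega_R=\R^{\mathbf{Card}(\omega_R)}$, the factor $Z_R(\lambda)\triangleq\int_{\Omega_R}H_R(\omega_R-u;\lambda)d_u=\int_{\Omega_R}H_R(v;\lambda)d_v$ is independent of $\omega_R$ by translation invariance, so dividing both the numerator and the denominator of the left-hand side of \eqref{eq41} by $Z_R(\lambda)$ leaves the ratio unchanged. By \eqref{condition}, $H_R(\omega_R-\cdot;\lambda)/Z_R(\lambda)$ is an approximate identity, so $I_1(\omega_R;\lambda)/Z_R(\lambda)\to F_R(\omega_R)$ using continuity and boundedness of $F_R$ (Assumption~\ref{A1}); applying the change of variables $\omega_R'\mapsto-\omega_R'$ together with the same fact, $I_2(\omega_R;\lambda)/Z_R(\lambda)\to F_R(-\omega_R)=F_R(\omega_R)$ by the evenness hypothesis. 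Hence the normalized numerator of \eqref{eq41} converges pointwise to $\big(G_I(\omega_I)+\widetilde G_I(\omega_I)\big)F_R(\omega_R)$.

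It then remains to interchange $\lim_{\lambda\to\infty}$ with the integral over $\Omega_R$ in the denominator of \eqref{eq41}. Ordinary dominated convergence is not directly available, so I would use the generalized version (Pratt's lemma): writing the normalized integrand as a nonnegative function of $u$, the triangle inequality bounds it by $\abs{G_I(\omega_I)}(\abs{F_R}\ast H_R(\cdot;\lambda))(u)/Z_R(\lambda)+\abs{\widetilde G_I(\omega_I)}(\abs{F_R}\ast H_R(-\,\cdot;\lambda))(u)/Z_R(\lambda)$, whose $\Omega_R$-mass equals $(\abs{G_I(\omega_I)}+\abs{\widetilde G_I(\omega_I)})\norm{F_R}_{L^1}$ for every $\lambda$ (by Fubini and $H_R\ge 0$) and which converges pointwise to $(\abs{G_I(\omega_I)}+\abs{\widetilde G_I(\omega_I)})\abs{F_R(u)}$. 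Since masses are conserved, Pratt's lemma yields $\int_{\Omega_R}\abs{S(\omega_I,u;\lambda)-\Ec_{C_0,P_0}(\omega_I,u;\lambda)}/Z_R(\lambda)\,d_u\to\abs{G_I(\omega_I)+\widetilde G_I(\omega_I)}\int_{\Omega_R}\abs{F_R(u)}d_u$. Taking the ratio of the two limits, the common factor $\abs{G_I(\omega_I)+\widetilde G_I(\omega_I)}$ cancels and \eqref{eq41} follows. Finally, the stated bound on $\Ec_{C_0,P_0}$ is the one-line estimate $\abs{\Ec_{C_0,P_0}(\omega;\lambda)}\le\E_{P_{\Xc}^\lambda}[\abs{f^\star(x)-f_0(x)}\,\abs{\phi(x,\omega)}]\le\norm{f^\star-f_0}_\infty$ using Assumption~\ref{A2}.

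The main obstacle I anticipate is precisely the last limit–integral interchange in the denominator: the convergence granted by \eqref{condition} is only weak (against continuous bounded test functions), so there is no $\lambda$-uniform pointwise majorant and one must instead track total $L^1$-mass (Fatou in one direction, conservation of mass in the other) to close the argument. A secondary technical point is the degenerate set of $\omega_I$ on which $G_I(\omega_I)+\widetilde G_I(\omega_I)=0$, where both sides of \eqref{eq41} become $0/0$; this case must be excluded (or the identity interpreted in a limiting sense) in the precise statement.
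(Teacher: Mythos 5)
Your proof is correct and follows the same skeleton as the paper's: reduce $S-\Ec_{C_0,P_0}$ to an integral of $F_0$ against the kernel in \eqref{eq444}, normalize by $Z_R(\lambda)=\int_{\Omega_R}H_R(\omega_R-u;\lambda)d_u$ (which is $\omega_R$-independent because $\Omega_R=\R^{\mathbf{Card}(\omega_R)}$), pass to the limit in the numerator via the approximate-identity property \eqref{condition}, and then interchange limit and integral in the denominator. The one genuinely different step is that last interchange, and the difference is in your favor. The paper observes that the normalized quantity is uniformly bounded and invokes the bounded convergence theorem; on the infinite-measure space $\Omega_R=\R^{\mathbf{Card}(\omega_R)}$, uniform boundedness alone does not license this interchange, so the paper's justification as written has a gap. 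Your route --- majorizing the normalized integrand by $\bigl(\abs{G_I}\,(\abs{F_R}\ast H_R)+\abs{\widetilde G_I}\,(\abs{F_R}\ast H_R(-\cdot))\bigr)/Z_R(\lambda)$, checking by Fubini (using $H_R\ge 0$) that this majorant has constant $L^1$-mass $(\abs{G_I}+\abs{\widetilde G_I})\norm{F_R}_{L^1}$ equal to the mass of its pointwise limit, and then applying Pratt's generalized dominated convergence --- is what actually closes this step, and is the right repair. Your closing caveat about the degenerate set of $\omega_I$ with $G_I(\omega_I)+\widetilde G_I(\omega_I)=0$, where \eqref{eq41} degenerates to $0/0$, is also a real issue the paper silently ignores; such $\omega_I$ must be excluded or the claim interpreted in a limiting sense. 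Everything else (the tower-property reduction, the Fubini interchanges under Assumptions \ref{A1}--\ref{A2}, the change of variables yielding $F_R(-\omega_R)=F_R(\omega_R)$ in place of the paper's $\delta(\omega_R+\omega_R')$ bookkeeping, and the one-line bound $\abs{\Ec_{C_0,P_0}(\omega;\lambda)}\le\norm{f^\star-f_0}_\infty$ from Assumption \ref{A2}) matches the paper's argument.
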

\addtocounter{theorem}{-1}
\endgroup
\begin{proof}
Recalling the definition of score function in \eqref{score}, we have
\begin{align}\label{eq1}
\resizebox{.94\hsize}{!}{$%
S(\omega;\lambda)=\E_{P_{\Xc\Yc}^\lambda}[y\phi(x,\omega)]=\E_{P_{\Xc}^\lambda}\E_{P_{\Yc \vert \Xc}}[y\phi(x,\omega)]=\E_{P_{\Xc}^\lambda}[f^\star(x)\phi(x,\omega)]=\E_{P_{\Xc}^\lambda}[f_0(x)\phi(x,\omega)]+\Ec_{C_0,P_0}(\omega;\lambda).
$%
}%
\end{align}
Now, in view of \eqref{bestmodel}, we can substitute $f_0(x)$ into above and get
\begin{align*}
\E_{P_{\Xc}^\lambda}[f_0(x)\phi(x,\omega)]&=\E_{P_{\Xc}^\lambda}\left[\int_\Omega F_0(\omega')\phi(x,\omega')d_{\omega'}\phi(x,\omega)\right]\\
&=\int_\Xc \int_\Omega F_0(\omega')\phi(x,\omega')\phi(x,\omega)d_{\omega'} P_\Xc^\lambda(x)d_x\\
&=\int_\Omega \int_\Xc  F_0(\omega')\phi(x,\omega')\phi(x,\omega) P_\Xc^\lambda(x) d_xd_{\omega'}\\
&=\int_\Omega  F_0(\omega') \left( \int_\Xc \phi(x,\omega')\phi(x,\omega) P_\Xc^\lambda(x)d_x \right) d_{\omega'}, \numberthis \label{eq2}
\end{align*}
as Assumptions \ref{A1} and \ref{A2} warrant valid interchange of integration order. Combining \eqref{eq1} and \eqref{eq2}, we obtain
\begin{align*}
S(\omega;\lambda)-\Ec_{C_0,P_0}(\omega;\lambda)&=\int_\Omega  F_0(\omega') \left(  \int_\Xc \phi(x,\omega')\phi(x,\omega) P_\Xc^\lambda(x)d_x \right) d_{\omega'}\label{eq222}\numberthis\\
&=\int_\Omega  F_0(\omega') \Big(H_I(\omega_I,\omega_I')H_R(\omega_R-\omega_R';\lambda)+H_I(\omega_I,-\omega_I')H_R(\omega_R+\omega_R';\lambda)\Big) d_{\omega'},\numberthis\label{eq11}
\end{align*}
where the last line follows by \eqref{eq444}. Since $F_0(\cdot)$ is uniformly bounded and continuous (Assumption \ref{A1}), we can appeal to Portmanteau Theorem as well as condition \eqref{condition} to get
\begin{align*}
\lim_{\lambda \rightarrow \infty}\frac{S(\omega;\lambda)-\Ec_{C_0,P_0}(\omega;\lambda)}{\int_{\Omega_R} H_R(\omega_R-u;\lambda)d_{u}}&=\int_\Omega  F_0(\omega') H_I(\omega_I,\omega_I')\lim_{\lambda \rightarrow \infty}\frac{H_R(\omega_R-\omega_R';\lambda)}{\int_{\Omega_R} H_R(\omega_R-u;\lambda)d_{u}}d_{\omega'} \\
&+\int_\Omega  F_0(\omega') H_I(\omega_I,-\omega'_I)\lim_{\lambda \rightarrow \infty}\frac{H_R(\omega_R+\omega_R';\lambda)}{\int_{\Omega_R} H_R(\omega_R-u;\lambda)d_{u}} d_{\omega'} \numberthis\label{eq122}\\ 
&=\int_\Omega  F_0(\omega') \Big( H_I(\omega_I,\omega_I')\delta(\omega_R-\omega_R')+H_I(\omega_I,-\omega_I')\delta(\omega_R+\omega_R')\Big) d_{\omega'} \numberthis  \\
&=\vphantom{\int_\Omega} F_R(\omega_R) \int_{\Omega_I} F_I(\omega_I') \Big( H_I(\omega_I,\omega_I')+ H_I(\omega_I,-\omega_I')\Big)d_{\omega_I'} \numberthis\label{eq123},
\end{align*}
where~\eqref{eq123} follows from the fact that $F_R(\cdot)$ is even on $\Omega_R$. Notice by \eqref{eq11} that 
$$
\frac{S(\omega;\lambda)-\Ec_{C_0,P_0}(\omega;\lambda)}{\int_{\Omega_R} H_R(\omega_R-u;\lambda)d_{u}},
$$
is uniformly bounded, and thus by bounded convergence theorem, we have that
\begin{align*}
\lim_{\lambda \rightarrow \infty}\int_{\Omega_R}\abs{\frac{S(\omega_I,z;\lambda)-\Ec_{C_0,P_0}(\omega_I,z;\lambda)}{\int_{\Omega_R} H_R(z-u;\lambda)d_{u}}}d_{z}&=\int_{\Omega_R}\lim_{\lambda \rightarrow \infty}\abs{\frac{S(\omega_I,z;\lambda)-\Ec_{C_0,P_0}(\omega_I,z;\lambda)}{\int_{\Omega_R} H_R(z-u;\lambda)d_{u}}}d_z\\
&=\int_{\Omega_R}\abs{\lim_{\lambda \rightarrow \infty}\frac{S(\omega_I,z;\lambda)-\Ec_{C_0,P_0}(\omega_I,z;\lambda)}{\int_{\Omega_R} H_R(z-u;\lambda)d_{u}}}d_{z}\\
&=\abs{\int_{\Omega_I} F_I(\omega_I') \Big( H_I(\omega_I,\omega_I')+ H_I(\omega_I,-\omega_I')\Big)d_{\omega_I'}}\int_{\Omega_R}\abs{F_R(z)}d_{z}. \label{eq445}\numberthis 
\end{align*}
Now, note that as $\Omega_R=\R^{\mathbf{Card}(\omega_R)}$, it holds that
\begin{align}\label{independence}
\int_{\Omega_R} H_R(\omega_R-u;\lambda)d_{u}=\int_{\Omega_R} H_R(\omega_R+u;\lambda)d_{u}=\int_{\Omega_R} H_R(u;\lambda)d_{u},
\end{align}
implying that in this case the denominator of \eqref{condition} does not depend on $\omega_R$ and is only a function of $\lambda$. The proof follows immediately by taking absolute value from \eqref{eq123}, dividing it by \eqref{eq445}, and applying \eqref{independence}.
\end{proof}
\noindent
Note that for any given $\lambda$, $\Ec_{C_0,P_0}(\omega;\lambda)$ is a decreasing function of $C_0$, because as we see in \eqref{class}, by increasing $C$ we make the class $\Fc_{P_\Omega}$ richer and decrease the projection error.

\noindent
Let us now restrict our attention to the cosine feature map $\phi(x,(\omega_I,\omega_R))=\cos(\omega_R^\top x+\omega_I)$ resulting in functions of the form $$f(x)=\int_\Omega F(\omega_I,\omega_R)\cos(\omega_R^\top x+\omega_I)d_{\omega_I} d_{\omega_R},$$ in \eqref{class}. In this case, $\Omega_R=\R^d$ and $\Omega_I=[0,2\pi)$.

We illustrate that several well-known distributions can be combined with the cosine feature map such that the orthogonality condition in Definition \ref{ortho} holds. As evident from \eqref{kernel}, setting $P_\Omega$ to Gaussian, Cauchy, or Laplacian distribution, the cosine feature map can produce Gaussian, Laplacian, and Cauchy kernels, respectively. However, the focus of below is on the distribution of training samples $P_\Xc$ rather than the distribution of random features $P_\Omega$. 
\begin{corollary}\label{cor:score}
Given Assumptions \ref{A1}-\ref{A2} in Theorem \ref{thm} in the regression setting, consider the cosine feature map $\phi(x,(\omega_I,\omega_R))=\cos(\omega_R^\top x+\omega_I)$  and let the input components be independent. For the cases $x_i\sim\Nc(0,\sigma_i^2)$, $x_i\sim{\tt Laplace}(0,\sigma_i)$, and $x_i\sim{\tt Cauchy}(0,\sigma_i)$, the result of Theorem \ref{thm} holds when $\lambda=(\sigma_1,\ldots,\sigma_d)$ and $\sigma_i \rightarrow \infty$ for all $i \in [d]$.
\end{corollary}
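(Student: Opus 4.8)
The plan is to reduce the corollary to Theorem~\ref{thm} by verifying a single ingredient: that for each of the three input laws the cosine feature map $\phi(x,(\omega_I,\omega_R))=\cos(\omega_R^\top x+\omega_I)$ satisfies the orthogonality condition of Definition~\ref{ortho} with the parameter taken to be $\lambda=(\sigma_1,\dots,\sigma_d)$. All of the other hypotheses of Theorem~\ref{thm} --- Assumptions~\ref{A1}--\ref{A2}, the regression setting, $\Omega_R=\R^d$, and $F_R$ even --- are already in force by assumption, so once Definition~\ref{ortho} is established the conclusion \eqref{eq41} follows verbatim.

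First I would expand the product of features using $\cos a\cos b=\tfrac{1}{2}\cos(a-b)+\tfrac{1}{2}\cos(a+b)$ with $a=\omega_R^\top x+\omega_I$ and $b=\omega_R'^\top x+\omega_I'$, which writes $\phi(x,\omega)\phi(x,\omega')$ as a ``difference'' cosine in $(\omega_R-\omega_R',\,\omega_I-\omega_I')$ plus a ``sum'' cosine in $(\omega_R+\omega_R',\,\omega_I+\omega_I')$. Taking $\E_{P_\Xc^\lambda}$ and using that the coordinates of $x$ are independent and that each of the Gaussian, Laplace, and Cauchy laws is symmetric about the origin --- hence has a real, nonnegative characteristic function $\varphi_j$ --- gives $\E_{P_\Xc^\lambda}[\cos(u^\top x+c)]=\cos(c)\,g(u;\lambda)$ with $g(u;\lambda)\triangleq\prod_{j=1}^{d}\varphi_j(u_j)$. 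Applying this to both cosines yields exactly the form \eqref{eq444}, with $H_I(\omega_I,\omega_I')=\tfrac{1}{2}\cos(\omega_I-\omega_I')$ (so that automatically $H_I(\omega_I,-\omega_I')=\tfrac{1}{2}\cos(\omega_I+\omega_I')$) and $H_R(\cdot;\lambda)=g(\cdot;\lambda)$. One then checks on the spot that $H_I$ is bounded and continuous and that $H_R$ is strictly positive, hence nonnegative and not identically zero for every $\lambda$.

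The substantive step is the nascent-delta condition \eqref{condition}. I would plug in the explicit characteristic functions $\varphi_j(u)=e^{-\sigma_j^2 u^2/2}$, $(1+\sigma_j^2 u^2)^{-1}$, and $e^{-\sigma_j|u|}$ for the three families, so that $\int_\R\varphi_j=c_j/\sigma_j$ for a family-dependent constant $c_j$; since $\Omega_R=\R^d$, the denominator $\int_{\Omega_R}H_R(\omega_R-u;\lambda)\,d_u=\prod_j c_j/\sigma_j$ does not depend on $\omega_R$. Dividing, the normalized kernel equals $\prod_{j=1}^{d}\tfrac{\sigma_j}{c_j}\varphi_j\big((\omega_R-\omega_R')_j\big)$, and each factor is recognizable as a probability density: the $\Nc(0,\sigma_j^{-2})$ density for the Gaussian input, a Cauchy density of scale $\sigma_j^{-1}$ for the Laplace input, and a Laplace density of scale $\sigma_j^{-1}$ for the Cauchy input. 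As every $\sigma_j\to\infty$ these scales vanish, so the product is a $d$-fold approximate identity and converges to $\delta(\omega_R-\omega_R')$ in the weak sense used in the proof of Theorem~\ref{thm} (integration against bounded continuous functions, which is all that the Portmanteau step requires). This verifies Definition~\ref{ortho}, and Theorem~\ref{thm} then applies.

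I expect the only genuine obstacle to be making this last convergence fully rigorous in $d$ dimensions with coordinate-dependent scales $\sigma_j$ all tending to infinity; this is a routine approximate-identity/tightness argument, but it must be stated so that its mode of convergence matches the one invoked in Theorem~\ref{thm}. The three families are singled out precisely because normalizing their characteristic-function products produces a clean, recognizable density whose scale is $1/\sigma_j$ --- recognizing that structure is what renders the verification mechanical once it is set up.
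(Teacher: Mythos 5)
Your proposal is correct and follows essentially the same route as the paper: reduce to verifying the orthogonality condition of Definition~\ref{ortho}, expand the product of cosines (you use the real product-to-sum identity where the paper uses complex exponentials, which is immaterial), read off $H_I$ and $H_R$ from the characteristic functions of the three symmetric input laws, and check that the normalized $H_R$ is a nascent delta as every $\sigma_i\to\infty$. Your added observation that the normalized kernels are themselves Gaussian/Cauchy/Laplace densities with scale $1/\sigma_j$ is a nice way to see the approximate-identity step, but the substance matches the paper's argument.
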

\begin{proof}
It only suffices to show that for each case the orthogonality condition in Definition \ref{ortho} holds. When $\phi(x,(\omega_I,\omega_R))=\cos(\omega_R^\top x+\omega_I)$, we have
\begin{align*}
\phi(x,(\omega_I,\omega_R))\phi(x,(\omega_I',\omega_R'))&=\frac{1}{4}\exp(\ib(\omega_R^\top x+\omega_R'^\top x+\omega_I+\omega_I'))\\
&+\frac{1}{4}\exp(-\ib(\omega_R^\top x+\omega_R'^\top x+\omega_I+\omega_I'))\\
&+\frac{1}{4}\exp(\ib(\omega_R^\top x-\omega_R'^\top x+\omega_I-\omega_I'))\\
&+\frac{1}{4}\exp(-\ib(\omega_R^\top x-\omega_R'^\top x+\omega_I-\omega_I')). \numberthis \label{eq3}
\end{align*}
Therefore, using any distribution on $\Xc$, we need to look up its characteristic function to simplify above. Let us use the notation 
\begin{align*}
\omega_R&=(\omega_{R,1},\ldots,\omega_{R,d})  ~~~~~~~~~~~~~~~~~~~~~~~\omega_R'=(\omega_{R,1}',\ldots,\omega_{R,d}')
\end{align*}
When the $i$-th input component is distributed according to $\Nc(0,\sigma^2_i)$, by standard properties of multi-variate Gaussian distribution, we get 
\begin{align*}
\int_\Xc &\phi(x,(\omega_I,\omega_R))\phi(x,(\omega_I',\omega_R'))P^\lambda_\Xc(x)d_x =\\
&~~~~~~~~\frac{1}{2}\cos(\omega_I+\omega_I')\prod_{i=1}^d\exp\left(-\frac{\sigma_i^2(\omega_{R,i}+\omega_{R,i}')}{2}\right)+\frac{1}{2}\cos(\omega_I-\omega_I')\prod_{i=1}^d\exp\left(-\frac{\sigma_i^2(\omega_{R,i}-\omega_{R,i}')}{2}\right).
\end{align*}
Recalling that $\lambda=(\sigma_1,\ldots,\sigma_d)$, we now have
$$
H_R(\omega_R+\omega_R';\lambda)=\prod_{i=1}^d\exp\left(-\frac{\sigma_i^2(\omega_{R,i}+\omega_{R,i}')}{2}\right) \Longrightarrow \int_{\Omega_R}H_R(\omega_R+\omega_R';\lambda)=\prod_{i=1}^d(\sigma_i/\sqrt{2\pi}),
$$
which implies
\begin{align}\label{limit}
\lim_{\sigma_1,\ldots,\sigma_d \rightarrow \infty} \frac{H_R(\omega_R+\omega_R';\lambda)}{\int_{\Omega_R}H_R(\omega_R+\omega_R';\lambda)}=\delta(\omega_R+\omega_R').
\end{align}
The proof for the other two cases is similar. For the case that the $i$-th input component is distributed according to ${\tt Cauchy}(0,\sigma_i)$, we have 
\begin{align*}
\int_\Xc &\phi(x,(\omega_I,\omega_R))\phi(x,(\omega_I',\omega_R'))P^\lambda_\Xc(x)d_x =\\
&~~~~~~~~~~~~~~\frac{1}{2}\cos(\omega_I+\omega_I')\prod_{i=1}^d\exp\left(-\sigma_i\abs{\omega_{R,i}+\omega_{R,i}'}\right)+\frac{1}{2}\cos(\omega_I-\omega_I')\prod_{i=1}^d\exp\left(-\sigma_i\abs{\omega_{R,i}-\omega_{R,i}'}\right),
\end{align*}
and
$$
H_R(\omega_R+\omega_R';\lambda)=\prod_{i=1}^d\exp\left(-\sigma_i\abs{\omega_{R,i}+\omega_{R,i}'}\right) \Longrightarrow \int_{\Omega_R}H_R(\omega_R+\omega_R';\lambda)=\prod_{i=1}^d(\sigma_i/2).
$$
It is easy to see that \eqref{limit} holds. On the other hand, when the $i$-th input component is distributed according to ${\tt Laplace}(0,\sigma_i)$, we have 
\begin{align*}
\int_\Xc &\phi(x,(\omega_I,\omega_R))\phi(x,(\omega_I',\omega_R'))P^\lambda_\Xc(x)d_x =\\
&~~~~~~~~~~~~~\frac{1}{2}\cos(\omega_I+\omega_I')\prod_{i=1}^d\frac{1}{1+\sigma_i^2(\omega_{R,i}+\omega_{R,i}')^2}+\frac{1}{2}\cos(\omega_I-\omega_I')\prod_{i=1}^d\frac{1}{1+\sigma_i^2(\omega_{R,i}-\omega_{R,i}')^2},
\end{align*}
and 
$$H_R(\omega_R+\omega_R';\lambda)=\prod_{i=1}^d\frac{1}{1+\sigma_i^2(\omega_{R,i}+\omega_{R,i}')^2} \Longrightarrow \int_{\Omega_R}H_R(\omega_R+\omega_R';\lambda)=\prod_{i=1}^d(\sigma_i/\pi).
$$
Again, obtaining \eqref{limit} is immediate.
\end{proof}

\subsection{Spectrum with Discrete Support}
The support set of random features $\Omega$ was assumed to be continuous in the previous section. We now discuss the feature map $\phi(x,\omega)=x_\omega$ resulting in functions of the form $$f(x)=\sum_{\omega=1}^d F_\omega x_\omega,$$ in \eqref{class}. This feature map leads to linear kernel when $\omega$ is drawn uniformly from the set $\Omega=[d]$. In this case, $\Omega$ is discrete, and we require less restrictive assumptions, compared to Theorem \ref{thm}. In particular, we can recover the entire spectrum, which is the vector $F_0$, and do not need the distribution on $\Xc$ to be parametric. We use the notation $F_0=(F_{0,1},\ldots,F_{0,d})$.
\begin{assumption}\label{A3}
We assume that $\norm{F_0}_\infty$ is uniformly bounded, and $x^n$ is an absolutely continuous random variable for all $n \in \{1,\ldots N\}.$
\end{assumption}
\begin{proposition}\label{cor:score2}
Consider the feature map $\phi(x,\omega)=x_\omega$ with $\omega \in [d]$. Given Assumption \ref{A3}, let the input components be independent. If for any $i\in [d]$, $x_i$ is sampled from a distribution with mean zero and variance $\sigma^2$, for any $\omega \in [d]$, we have $$\frac{\abs{S(\omega)-\Ec_{C_0,P_0}(\omega)}}{\sum_{u=1}^d\abs{S(u)-\Ec_{C_0,P_0}(u)}}=\frac{\abs{F_{0,\omega}}}{\sum_{u=1}^d\abs{F_{0,u}}}.$$
\end{proposition}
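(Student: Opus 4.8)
The plan is to follow the same template as the proof of Theorem~\ref{thm}, but the discrete structure of $\Omega=[d]$ together with the independence and zero-mean assumptions on the input coordinates collapses the orthogonality machinery of Definition~\ref{ortho} into an exact identity, so that no limiting argument in a parameter $\lambda$ is needed.

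First I would unfold the score function exactly as in~\eqref{eq1}: using $\E_{P_{\Yc\vert\Xc}}[y]=f^\star(x)$ and the tower rule, $S(\omega)=\E_{P_\Xc}[f^\star(x)x_\omega]$, and then add and subtract $f_0$ to obtain $S(\omega)=\E_{P_\Xc}[f_0(x)x_\omega]+\Ec_{C_0,P_0}(\omega)$ with $\Ec_{C_0,P_0}(\omega)=\E_{P_\Xc}[(f^\star(x)-f_0(x))x_\omega]$. Hence the only quantity to compute is $S(\omega)-\Ec_{C_0,P_0}(\omega)=\E_{P_\Xc}[f_0(x)x_\omega]$.

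Next I would substitute the discrete representation $f_0(x)=\sum_{u=1}^d F_{0,u}\,x_u$, which is the form that functions in the class~\eqref{class} take under the feature map $\phi(x,\omega)=x_\omega$, pull the finite sum out of the expectation, and invoke independence and zero mean of the coordinates so that $\E[x_u x_\omega]=\sigma^2\,\ind{u=\omega}$. This yields the exact identity $S(\omega)-\Ec_{C_0,P_0}(\omega)=\sigma^2 F_{0,\omega}$ for every $\omega\in[d]$. Taking absolute values, summing over $\omega$, and dividing, the common factor $\sigma^2$ cancels and the claimed equality follows at once.

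The computation carries no genuine obstacle; the only points needing attention are bookkeeping ones --- verifying that $f_0$ really has the stated linear form (forced by the definition of $\Fc_{P_\Omega}$ together with $\phi(x,\omega)=x_\omega$), that Assumption~\ref{A3} (namely $\norm{F_0}_\infty<\infty$ and absolute continuity of each $x^n$, which rules out a degenerate coordinate so that $\sigma^2>0$) makes all the expectations above finite and keeps the denominator from vanishing, and that exchanging the finite sum with the expectation is legitimate. In contrast to the continuous case, neither a Portmanteau/bounded-convergence argument nor a parametric family $P_\Xc^\lambda$ is required, which is precisely why the result is an exact equality rather than a limit.
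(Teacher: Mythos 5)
Your proposal is correct and follows essentially the same route as the paper: the paper likewise writes the discrete analog of the decomposition $S(\omega)-\Ec_{C_0,P_0}(\omega)=\sum_{\omega'}F_{0,\omega'}\E_{P_\Xc}[x_{\omega'}x_\omega]$, evaluates the inner expectation as $\sigma^2\ind{\omega=\omega'}$ via independence and zero mean, and concludes by cancellation of $\sigma^2$. Your added bookkeeping remarks (linearity of $f_0$ under this feature map, finiteness of the expectations, nonvanishing of $\sigma^2$) are sound but do not change the argument.
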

\begin{proof}: For $\phi(x,\omega)=x_\omega$, we have that
$$
\int_\Xc \phi(x,\omega)\phi(x,\omega')P_\Xc(x)d_x=\int_\Xc x_\omega x_{\omega'}P_\Xc(x)d_x=\sigma^2 \ind{\omega=\omega'},
$$
since the input components are independent, and the $i$-th input component has mean zero and variance $\sigma^2$. Thus, writing the discrete analog of \eqref{eq222} and substituting above into it, we have
$$
S(\omega)-\Ec_{C_0,P_0}(\omega)=\sum_{\omega'=1}^d  F_{0,\omega'} \left( \int_\Xc \phi(x,\omega')\phi(x,\omega) P_\Xc(x)d_x \right)=\sigma^2\sum_{\omega'=1}^d   F_{0,\omega'} \ind{\omega=\omega'}=\sigma^2 F_{0,\omega}.
$$ 
Then, the result of the proposition follows immediately.
\end{proof}

\end{document}